\pgfplotsset{compat=1.10}
\tikzset{cross/.style={cross out, draw=black, minimum size=2*(#1-\pgflinewidth), inner sep=0pt, outer sep=0pt},
cross/.default={2pt}}
\newcommand{\model}[1]{\mathcal{M}_{#1}}
\newcommand{\abst}[2]{\mathcal{I}_{(#1,#2)}}
\newcommand{\card}[1]{\lvert#1\rvert}
\DeclareMathOperator*{\argmin}{arg\,min}
\newcommand{\mshift}{S}
\newcommand{\dist}[3]{d_{#1}(#2,#3)}
\newcommand{\distance}[3]{\lVert #1 - #2\rVert_{#3}}
\newcommand{\name}{\Delta}
\newcommand{\dataset}{\mathcal{D}}
\newcommand{\deleteFT}[1]{\textcolor{pink}{
}}
\algnewcommand\algorithmicforeach{\textbf{for each}}
\definecolor{cadmiumgreen}{rgb}{0.0, 0.75, 0.40}
\definecolor{richcarmine}{rgb}{0.85,0.00,0.23}
\newtheorem{definition}{Definition}
\newtheorem{proposition}{Proposition}
\title{Provably Robust and Plausible Counterfactual Explanations for \\Neural Networks via Robust Optimisation}
\author {
        Junqi Jiang$^1$,
        Jianglin Lan$^2$, Francesco Leofante$^1$, Antonio Rago$^1$, Francesca Toni$^1$\\ \vspace{0.2cm}
\affiliations\large
    $^1$ Department of Computing, Imperial College London 
    \\$^2$ James Watt School of Engineering, University of Glasgow\\ \vspace{0.2cm}
\emails
    \{junqi.jiang, f.leofante, a.rago, f.toni\}@imperial.ac.uk,  jianglin.lan@glasgow.ac.uk
}
\begin{document}
\maketitle

\begin{abstract}
	Counterfactual Explanations (CEs) have received increasing interest as a major methodology for explaining neural network classifiers. Usually, CEs for an input-output pair are defined as data points with minimum distance to the input that are classified with a different label than the output.
To tackle the established problem that CEs are easily invalidated when model parameters are updated (e.g. retrained), 
studies have proposed ways to certify the robustness of CEs under model parameter changes bounded by a norm ball. However, existing methods targeting this form of robustness are not sound or complete, and they may generate implausible CEs, i.e., outliers wrt the training dataset. In fact, no existing method simultaneously optimises for proximity and plausibility while preserving robustness guarantees. In this work, we propose Provably RObust and PLAusible Counterfactual Explanations (PROPLACE)\footnote{The implementation is available at https://github.com/junqi-jiang/proplace}, a 
method leveraging on robust optimisation techniques to address the aforementioned limitations in the literature. 
We formulate an iterative algorithm to compute provably robust CEs and prove its 
{convergence}, soundness and completeness. Through a comparative experiment involving six 
baselines, five of which target robustness, we show that PROPLACE achieves state-of-the-art performances against metrics on three evaluation aspects.
\end{abstract}

\section{Introduction}
\label{sec:intro}




Counterfactual Explanations (CEs) have become a major methodology to explain NNs due to their simplicity, compliance with the regulations {\cite{Wachter_17}}, and alignment with human thinking {\cite{Celar2023}}.  
Given an input point to a classifier, a CE is a modified input classified with another, often more desirable, label. 
Consider a customer that is denied a loan by the machine learning system of a bank. A CE the bank provided for this customer could be, \emph{the loan application would have been approved, had you raised your annual salary by {\$} 6000.} Several desired properties of CEs have been identified in the literature
{, t}he most fundamental 
{of which} is \emph{validity}, {requiring} that the CE needs to be correctly classified with a specified label {\cite{Tolomei_17}}. \emph{Proximity} refers to the closeness between the CE and the input measured by some distance metric, which translates to {a measure of} the effort the end user has to make to achieve the prescribed changes {\cite{Wachter_17}}. The CEs should also lie on the data manifold of the training dataset and not be an outlier, which is 
{assessed via} \emph{plausibility} \cite{poyiadzi2020face}. 
Most recently, the \emph{robustness} of CEs, amounting to their validity under various types of uncertainty, has drawn increasing attention due to its real-world importance. In this work, we consider {robustness to} the model parameter changes occurring in the classifier on which the CE was generated. Continuing the loan example, assume the bank's machine learning 
{model is} retrained with new data, while{, in the meantime,} the customer has achieved a raise in salary {(as prescribed by the CE)}. The customer {may then return} 
to the bank only to find {that} the previously specified CE is now invalidated 
{by} the new model. In this case, the bank 
{could} be 
{seen as being} responsible 
{by} the user 
{and could} potentially 
{be legally liable, risking financial and reputational} 
damage to the 
organisation. The quality of such unreliable CE is also questionable{:} \cite{rawal2020algorithmic,pmlr-v162-dutta22a} have shown that CEs found by existing non-robust methods are prone to such invalidation due to their closeness to the decision boundary. 

Various methods have been proposed to tackle this issue. 
{\cite{nguyen2022robust,pmlr-v162-dutta22a,hamman2023robust} focus on building heuristic methods using model confidence, Lipschitz continuity, and quantities related to the data distribution. \cite{upadhyay2021towards,blackconsistent,oursaaai23} consider optimising the validity of CEs under bounded model parameter changes, which are also empirically shown to be robust to the unbounded parameter changes scenarios.} 
Among the existing methods, only \cite{oursaaai23} 
{provides} robustness guarantees 
{in a formal approach, which are known to be} 
lacking in the explainable AI (XAI) literature 
{in general, aside from some notable examples, e.g. as introduced in}  
\cite{Marques-Silva_22}. 
{Their method generates such provably robust CEs via iteratively tuning the hyperparameters of an arbitrary non-robust CEs method and testing for robustness. {However, t}his method cannot always guarantee soundness and is not complete, which is also the case for the method in \cite{upadhyay2021towards}.} Another limitation in the current literature is that the methods targeting this form of robustness guarantee do not find plausible CEs, 
{limiting their practical applicability}. 

Such limitations have motivated this work.
After discussing relevant studies in Section~\ref{sec:related}, we introduce 
the robust optimisation problem for computing CEs with proximity property as the objective, 
{and} robustness and plausibility properties as constraints (Section~\ref{sec:problemstatement}). In Section~\ref{sec:method}, we then present Provably RObust and PLAusible CEs (PROPLACE), a method leveraging on robust 
optimisation techniques 
to address the limitation in the literature that no method optimises for proximity and plausibility while providing formal robustness guarantees. We show the (conditional) soundness and completeness of our method, and give a bi-level optimisation procedure that will converge and terminate. Finally, in our experiments, we compare PROPLACE with 
{six} {existing} 
CE methods
{, five of which target robustness,} on four benchmark datasets. 
{The results show that} our method achieves the best robustness and plausibility, 
while demonstrating superior proximity among the most robust baselines.

\section{Related Work}
\label{sec:related}


As increasing interest has been focused on XAI, a plethora of CE generation methods have been proposed (see \cite{Guidotti_22} for a recent overview). Given our focus on neural networks, we cover those explaining the outputs of these models. 
\cite{Wachter_17} proposed a gradient-based optimisation method targeting the validity and proximity of CEs. Similarly, using the 
{mixed integer programming (MILP)} representation of neural networks, \cite{mohammadi2021scaling} formulated the CEs search into {a} constrained optimisation problem such that the resulting CEs are guaranteed to be valid. \cite{mothilal2020explaining} advocated generating a diverse set of CEs for each input 
to enrich the information provided to the explainee. 
Several works also addressed \emph{actionability} constraints \cite{ustun2019actionable,verma2022amortized,vo2023feature}, only allowing changes in the actionable features of real users.
\cite{poyiadzi2020face} proposed a graph-based method to find a path of CEs that are all lying within the data manifold. Several other works have proposed to use ({v}ariational) {a}uto-{e}ncoders or nearest neighbours to induce plausibility~\cite{dhurandhar2018explanations,pawelczyk2020learning,van2021interpretable}. Among these properties, {actionability and plausibility are two orthogonal considerations which make the CEs realistic in practice, and} trade-offs have been identified between plausibility and proximity \cite{pawelczyk2020counterfactual}.

In this work, our focus is on the {property of} robustness to {changes in} the model parameter{s, i.e. weights and biases} in the underlying classifier. {Several} studies looked at CEs under bounded model parameter changes of a neural network: \cite{upadhyay2021towards} 
{formulated} a novel loss function and solve{d} using gradient-based methods. 
\cite{blackconsistent} proposed a heuristic based on the classifier's Lipschitz constant and the model confidence to search for robust CEs. \cite{oursaaai23} 
{used} interval abstraction{s} 
{\cite{PrabhakarA19}} to certify the robustness against bounded parameter changes, and embed the certification process into existing CE methods. Differently to our approach, these methods do not generate plausible CEs {or} guarantee {that} provably robust CEs {are found}. Other relevant works place their focus on the robustness of CEs against {unbounded model changes}. \cite{robovertime} took the approach of {augmenting the} training data with previously generated CEs. \cite{nguyen2022robust} focused on the data distribution and formulated the problem as posterior probability ratio minimisation to generate robust and plausible CEs.  
By using first- and second-moment information, \cite{bui2022counterfactual} proposed lower and upper bounds on the CEs' validity under random parameter updates and generated robust CEs using gradient descent.
\cite{pmlr-v162-dutta22a} defined a novel robustness measure based on the model confidences over the neighbourhood of the CE, and used dataset points that satisfy some robustness test to find close and plausible CEs. Their notion is then further re-calibrated for neural networks with probabilistic robustness guarantees in \cite{hamman2023robust}. Trade-offs between robustness and proximity were discussed by \cite{pawelczyk2022probabilistically} and \cite{upadhyay2021towards}.

Other forms of CEs' robustness have also been investigated, for example, robustness against: 
input perturbations \cite{Alvarez-Melis_18,Sharma_20,gnnce,Dominguez-Olmedo_22,Huai_22,Virgolin_23,Zhang_23};  noise in the execution of CEs 
\cite{pawelczyk2022probabilistically,LeofanteLomuscio23,LeofanteLomuscio23b,maragno2023finding}; 
and model multiplicity 
\cite{pawelczyk2020counterfactual,Leofante2023modelmultiplicity}.

\section{Preliminaries and Problem Statement}
\label{sec:problemstatement}
\paragraph{Notation.} Given an integer $k$, we use $[k]$ to denote the set $\{1,\ldots,k\}$. We use $\card{S}$ to denote the cardinality of a set $S$. 

\paragraph{
{Neural Network (NN)}.} We denote a NN as $\model{\Theta} : \mathcal{X}\subseteq \mathbb{R}^d \rightarrow \mathcal{Y} \subseteq \mathbb{N}$, where the inputs are $d$-dimensional vectors and the outputs are discrete class labels. $\Theta$ represents the collection of parameters that characterise the NN. 
Throughout the paper, we will illustrate our method using the binary classification case (i.e. $\mathcal{Y}=\{0,1\}$), though the method is readily applicable to multi-class classification.
Let $\model{\Theta}(x)$ also (with an abuse of notation) refer to the pre-sigmoid (logit) value {in} the NN.
Then, 
for an input $x \in \mathcal{X}$, we say $\model{\Theta}$ classifies $x$ {as} class $1$ if $\model{\Theta}(x) \geq 0$, otherwise $\model{\Theta}$ classifies $x$ {as} class $0$. 

\paragraph{
{Counterfactual Explanation (CE)}.} For an input $x \in \mathcal{X}$ that is classified to the unwanted class $0$  (assumed throughout the paper), a CE $x' \in \mathcal{X}$ is some other data point ``similar'' to the input, e.g. by some distance measure, but classified to the desired class $1$. 

\begin{definition}{\emph{(CE)}}
Given a NN $\model{\Theta}$, an input $x\in\mathcal{X}$ such that $\model{\Theta}(x)<0$, and a distance metric $dist: \mathbb{R}^d \times \mathbb{R}^d \rightarrow \mathbb{R}^+$, a \emph{CE} $x'\in \mathcal{X}$ is such that:
\begin{alignat*}{3}
&\argmin_{x'}  && \text{  } dist(x,x')\\
&\text{subject to} && \quad \model{\Theta}(x') \geq 0
\end{alignat*}
\label{eqn:all-lines}
\label{def:ce}
\end{definition}

The minimum distance objective {targets the} minimum effort by the end user to achieve a change, which corresponds to the basic requirement of proximity mentioned in Section~\ref{sec:intro}. In the literature, normalised $L_1$ 
distance is often adopted as the distance metric because it induces changes in fewer features in the CE \cite{Wachter_17}. However, methods that find such plain CEs usually result in unrealistic combinations of features, or outliers to the underlying data distribution of the training dataset. A plausible CE avoids these issues {and is} formally {defined as follows}:

\begin{definition}{\emph{(Plausible CE)}}
Given a NN $\model{\Theta}$ and an input $x\in\mathcal{X}$ such that $\model{\Theta}(x)<0$, a distance metric $dist: \mathbb{R}^d \times \mathbb{R}^d \rightarrow \mathbb{R}^+$ and some plausible region $\mathcal{X}_{plaus} \subseteq \mathbb{R}^d$, a \emph{plausible CE} is an $x'$ such that:
\nonumber
\begin{alignat*}{3}
&\argmin_{x'}  && \text{  } dist(x,x')\\ 
&\text{subject to} && \quad \model{\Theta}(x') \geq 0, \quad x' \in \mathcal{X}_{plaus}\\ 
\end{alignat*}

\label{def:plausiblece}
\end{definition}

The plausible region $\mathcal{X}_{plaus}$ {may be used to} eliminate any unrealistic feature values (e.g. a value {of} 0.95 for a discrete feature), {or to} indicate a densely populated region that is close to the data manifold of the training dataset. Additionally, it {may} also include some actionability considerations{, such as} restricting immutable attributes ({e.g. avoiding suggesting} changes in gender) {or} specifying some relations between input features ({e.g.} obtaining a doctoral degree should also cost the user at least 4 years). 

\paragraph{Robustness of Counterfactual Explanations.} Studies have shown that CEs found by the above formulations are readily invalidated when small changes occur {in} the model parameters of the NNs. We formalise this in the following and {begin} by introducing {a} distance {measure} between two NNs and {a definition of} model shift. Note that Definitions~\ref{def:modeldistance} to \ref{def:delta_robustness} are adapted from \cite{oursaaai23}.


\begin{definition}{\emph{(Distance between two NNs)}}
    Consider two NNs $\model{\Theta}$, $\model{\Theta'}$ of the same 
    architecture characterised by parameters $\Theta$ and $\Theta'$. For $0\leq p\leq \infty$, the \emph{p-distance} between $\model{\Theta}$ and $\model{\Theta'}$ is $\dist{p}{\model{\Theta}}{\model{\Theta'}} = \distance{\Theta}{\Theta'}{p}$.
\label{def:modeldistance}
\end{definition}



\begin{definition}{\emph{(Bounded model shifts)}}
Given a NN $\model{\Theta}$, $\delta \in \mathbb{R}_{>0}$ and $0 \leq p \leq \infty$, the \emph{set of 
{bounded model shifts}} is defined as
$\Delta = \{\model{\Theta'} \mid \dist{p}{\model{\Theta}}{\model{\Theta'}} \leq \delta\}$. 
\label{def:set_of_plausible_shifts}
\end{definition}


\paragraph{Certifying Robustness.} 
Having presented the definitions required to formalise the optimisation problem for finding provably robust and plausible CEs, we now introduce another relevant technique that uses interval abstractions to certify the robustness of CEs. 
We refer to the certification process as the $\Delta$-robustness test; this will be used for parts of our method and also as an evaluation metric in the experiments. We assume $p=\infty$ for bounded model shifts $\Delta$ throughout the paper.  

\begin{definition}{\emph{(Interval abstraction of NN)}} 
Consider a NN $\model{\Theta}$ with $\Theta = [\theta_0, \ldots, \theta_d]$. Given a set of 
{bounded model shifts} $\Delta$, we define the \emph{interval abstraction of $\model{\Theta}$ under $\Delta$} as the model $\abst{\Theta}{\Delta}: \mathcal{X} \rightarrow \powerset{\mathbb{R}}$ (for $\powerset{\mathbb{R}}$ the set of all closed intervals over $\mathbb{R}$) such that:
\begin{itemize}
\item $\model{\Theta}$ and $\abst{\Theta}{\Delta}$ have the same architecture; 
\item $\abst{\Theta}{\Delta}$ is parameterised by an interval-valued vector $\bm{\Theta} \!\!=\!\! [\bm{\theta}_0, \!\ldots, \! \bm{\theta}_d]$ such that, for $i \in \{0,\ldots,d\}$,
$\bm{\theta}_i = [\theta_i - \delta,  \theta_i + \delta]$, where $\delta$ is the bound 
in $\name$.
\end{itemize}
\label{def:interval_abstraction}
\end{definition}


When $p=\infty$, $\bm{\theta}_i$ encodes the range of possible model parameter changes by the application of $\Delta$ to $\model{\Theta}$. Given a fixed input, by propagating the weight and bias intervals, the output range of $\abst{\Theta}{\Delta}$ exactly represents the possible output range for the input by applying $\Delta$ to $\model{\Theta}$ \cite{oursaaai23}.

\begin{definition}{\emph{(Interval abstraction of NN classification)}}
    Let $\abst{\Theta}{\Delta}$ be the interval abstraction of a NN $\model{\Theta}$ under $\Delta$. Given an input $x \in \mathcal{X}$, let $\abst{\Theta}{\Delta}(x)=[l,u]$
    . Then, we say that \emph{$\abst{\Theta}{\Delta}$ classifies $x$ as class $1$} if $l \geq 0$ (denoted, with an abuse of notation,  $\abst{\Theta}{\Delta}(x) \geq 0$),  and as class $0$ if $u<0$ (denoted, with an abuse of notation,  $\abst{\Theta}{\Delta}(x) < 0$). 
    \label{def:classification_interval}
\end{definition}

Indeed, for an input, if the lower bound $l$ of pre-sigmoid output node interval $[l, u]$ of $\abst{\Theta}{\Delta}$ satisfies $l \geq 0$, then it means all shifted models in $\Delta$ would predict the input with a pre-sigmoid value that is greater than or equal to 0, all resulting in predicted label 1. We apply this intuition to the CE context:

\begin{definition}{\emph{($\name$-robust CE)}}
    Consider an input $x \in \mathcal{X}$ and a model $\model{\Theta}$ such that $\model{\Theta}{(x)} < 0$. Let $\abst{\Theta}{\Delta}$ be the interval abstraction of $\model{\Theta}$ under $\Delta$. We say that \emph{a CE $x'$ is $\name$-robust} iff $\abst{\Theta}{\Delta}(x') \geq 0$.
    \label{def:delta_robustness}
\end{definition}

Checking whether a CE $x'$ is $\Delta$-robust requires {the calculation of} the lower bound $l$ of the pre-sigmoid output node interval $[l, u]$ of $\abst{\Theta}{\Delta}$. This process can be encoded as a MILP program (see Appendix B in \cite{oursaaai23}). 

\paragraph{Optimising for Robustness and Plausibility.} Now we introduce the targeted provably robust and plausible optimisation problem based on Definitions \ref{def:plausiblece} and \ref{def:delta_robustness}, by taking inspiration from the robust optimisation technique \cite{Ben+09}.

\begin{definition} {\emph{(Provably robust and plausible CE)}}
Given a NN $\model{\Theta}$, an input $x\in\mathcal{X}$ such that $\model{\Theta}(x)<0$, a distance metric $dist: \mathbb{R}^d \times \mathbb{R}^d \rightarrow \mathbb{R}^+$ and some plausible region $\mathcal{X}_{plaus} \subseteq \mathbb{R}^d$, let $\abst{\Theta}{\Delta}$ be the interval abstraction of $\model{\Theta}$ under the bounded model shifts $\Delta$. Then, a \emph{{p}rovably {r}obust and {p}lausible CE} $x' \in \mathcal{X}$ is such that:
\begin{subequations}
\begin{alignat}{3}
&\argmin_{x'}  && \text{  } dist(x,x')\label{eqn:proximityobj} \\
&\text{subject to} && \quad \abst{\Theta}{\Delta}(x')\geq 0, \label{eqn:robobj}\\
&\text{ }  && \quad x' \in \mathcal{X}_{plaus} \label{eqn:plausobj}
\end{alignat}
\label{eqn:original_robust_problem}
\end{subequations}
\label{def:robandplausproblemminmax}
\end{definition}

The optimisation problem 
{\eqref{eqn:original_robust_problem}} can be {equivalently} rewritten as follows:
\begin{subequations}
\begin{alignat}{3}
&\argmin_{x'}  && \text{  } dist(x,x')\label{eqn:proximityobjminmax} \\
&\text{subject to} && \quad \max_{\model{\Theta'} \in \Delta} [-\model{\Theta'}(x')] \leq 0,\label{eqn:robobjminmax}\\
&\text{ } && \quad x' \in \mathcal{X}_{plaus} \label{eqn:plausobjminmax}
\end{alignat}
\label{eqn:minmax_robust_problem}
\end{subequations}


We  show  next a novel approach for solving this robust optimisation problem \eqref{eqn:minmax_robust_problem}
.

\section{PROPLACE}
\label{sec:method}

\begin{algorithm}[b]
\caption{PROPLACE} \label{alg:algo1}
\begin{algorithmic}[1]
\Require input $x$, model $\model{\Theta}$, 
\State $\quad$ training dataset $\dataset = \{(x_1,y_1),\ldots,(x_n,y_n)\}$, 
\State $\quad$ set of bounded model shifts $\Delta$,
\State $\quad$ plausible region to be used as CEs search space $\mathcal{X}_{plaus}$.
\State \textbf{Init}: $x' \gets \emptyset$; $\Delta' \gets {\{\model{\Theta}\}}$

\State{\textbf{Repeat until }$(-\model{\Theta'}(x')) \leq 0$}

{
\quad $x'\gets \texttt{Outer\_minimisation}(\model{\Theta}, x, \mathcal{X}_{plaus}, \Delta')$

\quad $\model{\Theta'} \gets \texttt{Inner\_maximisation}(x', \Delta', \Delta)$

\quad $\Delta' \gets \Delta' \cup \{\model{\Theta'}\}$
}
\State \textbf{return} $x'$
\end{algorithmic}
\end{algorithm}

{The} procedure for computing robust and plausible CEs{, solving the optimisation problem 
{\eqref{eqn:minmax_robust_problem}} 
, is}  
summarised in Algorithm~\ref{alg:algo1}. We will first introduce how the plausible region $\mathcal{X}_{plaus}$ is constructed in Section \ref{subsec:id search space} (corresponding to Line 3, Algorithm~\ref{alg:algo1}). 
Then, {in Section \ref{subsec:bilevel optimisation}} we {will} present the bi-level optimisation method (corresponding to Lines 4-5, Algorithm~\ref{alg:algo1}) 
to solve the robust optimisation problem \eqref{eqn:minmax_robust_problem}. 
{In Section \ref{subsec:bilevel optimisation}} we {will also} instantiate the complete bi-level optimisation formulations (in {MILP} form) of our method for NNs with ReLU activation functions. 
{Finally, i}n Section \ref{subsec:theoreticalstuff} we discuss the soundness and completeness of Algorithm~\ref{alg:algo1} and prove its convergence.


\subsection{Identifying Search Space $\mathcal{X}_{plaus}$}\label{subsec:id search space}

As mentioned in Section~\ref{sec:related}, points from the training dataset (especially $k$-nearest-neighbours) are frequently utilised in the literature to induce plausibility. In this work, we propose to use a more special{ised} kind of dataset point, \emph{$k$ $\Delta$-robust {n}earest-{n}eighbours}, to construct the search space for CEs that is both plausible and robust. 


\begin{definition}{\emph{(k $\Delta$-robust nearest-neighbours)}}
{Given a NN $\model{\Theta}$ and an input $x\in\mathcal{X}$ such that $\model{\Theta}(x)<0$, a distance metric $dist: \mathbb{R}^d \times \mathbb{R}^d \rightarrow \mathbb{R}^+$,}
a dataset $\dataset \subseteq \mathbb{R}^d$ on which  $\model{\Theta}$ is trained, and a set of 
{bounded} model shifts of interest $\Delta$, let $\abst{\Theta}{\Delta}$ be the interval abstraction of $\model{\Theta}$ under $\Delta$. 
Then, the \emph{k $\Delta$-robust nearest-neighbours} of $x$ is a set $S_{k, \Delta} \subseteq \dataset$ with cardinality $|S_{k, \Delta}|=k$ such that:
\begin{itemize}
    \item $\forall x'\in S_{k, \Delta}$, $x'$ is $\Delta$-robust, i.e. $\abst{\Theta}{\Delta}(x')\geq 0$, 
    \item $\forall 
    x^{''} \in \dataset \setminus S_{k, \Delta}$, if $x^{''}$ is $\Delta$-robust, $dist(x, x^{''}) \geq \underset{x'\in S_{k, \Delta}}{\emph{max}} \text{ }
    dist(x, x')$.
\end{itemize}
\end{definition}

The first constraint 
enforces the $\Delta$-robustness, and the second states that the points contained in the set are the $k$ nearest points to the input $x$ amongst all the $\Delta$-robust dataset points. In practice, in order to compute the $k$ $\Delta$-robust nearest-neighbours, we fit a k-d tree on the dataset points that are classified to the desired class, then iteratively query the k-d tree for the nearest neighbour of an input, until the result satisfies the $\Delta$-robustness test (Definition~\ref{def:delta_robustness}).  

Restricting the CE search space within the convex hull of these robust neighbours will likely induce high plausibility (and robustness). However, because these points are deep within 
parts of the training dataset that are classified to another class, they 
{may} be far from the model's decision boundary, therefore 
{resulting in} large distances to the inputs. In fact, \cite{pmlr-v162-dutta22a,hamman2023robust} adopted similar robust nearest neighbours (using other notions of robustness tests) as the final CEs, and poor proximity 
{was} observed in their experiment results. They have also shown that finding CEs using line search between proximal CEs and these robust neighbours can slightly improve proximity.

In our case, since the validity of the CEs can be guaranteed from the optimisation procedures (Section~\ref{subsec:bilevel optimisation}), we expand the plausible search space across the decision boundary by taking the input into consideration, which is assumed to also be inside the data distribution.



\begin{definition}
    \emph{(Plausible region)} Given an input $x\in \mathbb{R}^d$ and its k $\Delta$-robust nearest neighbours $S_{k, \Delta}$, the \emph{plausible region} $\mathcal{X}_{plaus}$ is the convex hull 
        of $S_{k, \Delta} \cup \{x\}$.
\end{definition}

By restricting the CE search space to such convex hull, the method has the flexibility to find close CEs (with $x$ as a vertex), or robust and plausible CEs (with the robust neighbours as other vertices). This $\mathcal{X}_{plaus}$ ensures the soundness and completeness of our method (Section~\ref{subsec:theoreticalstuff}).



\subsection{Bi-level Optimisation Method with MILP}\label{subsec:bilevel optimisation}
\subsubsection{Outer and Inner Optimisation problems}\label{subsubsec:outer inner problems}
We separate the robust optimisation problem \eqref{eqn:minmax_robust_problem} to solve into outer minimisation and inner maximisation problems, as specified in Definitions~\ref{def:outerprob} and \ref{def:innerprob}. 

\begin{definition}
Given a 
NN $\model{\Theta}$ and an input $x\in\mathcal{X}$ such that $\model{\Theta}(x)<0$, 
a distance metric $dist: \mathbb{R}^d \times \mathbb{R}^d \rightarrow \mathbb{R}^+$ and some plausible region $\mathcal{X}_{plaus} \subseteq \mathbb{R}^d$, let $\Delta'$ be a set of shifted models. Then, the \emph{outer minimisation problem} finds a CE $x'$ such that:
\begin{subequations}
\begin{alignat}{3}
&\argmin_{x'}  && \text{  } dist(x,x')\label{eqn:outerobj} \\
&\text{subject to} && \quad -\model{\Theta'}(x') \leq 0, \text{ for each } \model{\Theta'} \in \Delta', \label{eqn:outerrob}\\
&\text{ } && \quad x' \in \mathcal{X}_{plaus}
\label{eqn:outerplaus}
\end{alignat}
\label{eqn:outer_problem}
\end{subequations}
\label{def:outerprob}
\end{definition}



\begin{definition}
Given a CE $x'\in\mathcal{X}$ 
found by the outer minimisation problem, 
the set of bounded model shifts $\Delta$, the \emph{inner maximisation problem} finds a shifted model $\model{\Theta'}$ such that:
\begin{subequations}
\begin{alignat}{2}
& \underset{\model{\Theta'}}{\emph{\text{arg max}}} && \text{  } -\model{\Theta'}(x') \label{eqn:innerobj} \\
&\text{subject to} && \quad \model{\Theta'} \in \Delta
\end{alignat}
\label{eqn:inner_problem}
\end{subequations}

\label{def:innerprob}
\end{definition}

The outer minimisation problem relaxes the constraint that a CE should be robust to all possible model shifts in the set $\Delta$; instead, it requires robustness wrt a subset of the model changes $\Delta' \subset \Delta$. $\Delta'$ is initialised with the original classification model $\model{\Theta}$. At the first execution, the outer minimisation finds the closest CE $x'$ valid for that model. Then, $x'$ is passed to the inner maximisation problem to compute the model shift $\mshift({\model{\Theta})}$ that produces the lowest model output score. This model shift is considered to be the worst-case perturbation on the model parameters in the set $\Delta$, and is added to $\Delta'$. In the next iterations, $x'$ is updated to the closest CE valid for all the models in $\Delta'$ (outer), which is being expanded (inner), until convergence. 

\subsubsection{MILP Formulations}\label{subsec:MILP form}
The proposed bi-level optimisation method in Section \ref{subsubsec:outer inner problems} is independent of specific NN structures. In this section, we take NNs with ReLU activation functions as an example to further elaborate the method. We denote the total number of hidden layers in an NN $\model{\Theta}$ as $h$. We call $N_0$, $N_{h+1}$, and $N_i$  the 
{sets} of input, output, and hidden layer nodes for $i \in [h]$, and their node values are $V_0$, $V_{h+1}$, and $V_i$. For hidden layer nodes 
$V_i = \text{ReLU}(W_i V_{i-1} + B_i)$, and for output layer nodes $V_{h+1}=W_{h+1} V_h  + B_{h+1}$, where $W_i$ is the weight matrix connecting nodes at layers $i-1$ and $i$, {and} $B_i$ is the bias vector of nodes $N_i$. 
We instantiate the formulations using normalised $L_1$, while our method PROPLACE can accommodate arbitrary distance metrics.



The outer minimisation problem is equivalent to the following MILP program, {where} the superscripts $j$ on weight matrices and bias vectors indicate they are model parameters of the $j$-th model $\model{\Theta}^j \in \Delta'$:
\begin{subequations}\label{OP:outerminimisation}
	\begin{align}
		\label{obj:OP MILP}
		\underset{x', \gamma, \lambda}{\min}  & \quad \| x - x' \|_1 \\
		\text{s.t.} \quad
		\label{const1:OP MILP}
		& V_0^j = x',\\
		\label{const2:OP MILP}
		& \gamma_i^j \in \{0,1\}^{|N_i|}, ~ i \in [h], j \in [|\Delta'|] \\
		\label{const3:OP MILP}		
		& 0 \leq V_i^j \leq M \gamma_i^j, ~ i \in [h]  , j \in [|\Delta'|] \\
		\label{const6:OP MILP}		
		& W_i^j V_{i-1}^j + B_i^j \leq V_i^j \leq (W_i^j V_{i-1}^j + B_i^j) + M (1-\gamma_i^j), \\& \nonumber i \in [h], j \in [|\Delta'|] \\
		\label{const7:OP MILP}
		& W_{h+1}^j V_h^j + B_{h+1}^j \geq 0, j \in [|\Delta'|] \\
            & \lambda_l \in [0, 1],~ l \in [|S_{k, \Delta} \cup \{x\}|], ~\sum_{l=1}^{|S_{k, \Delta} \cup \{x\}|} \lambda_l=1 \nonumber \\
            \label{const9:OP MILP}
            & x' = \sum_{l=1}^{|S_{k, \Delta} \cup \{x\}|}\lambda_l x'_l, x'_l\in S_{k, \Delta} \cup \{x\}
	\end{align}
\end{subequations}

Constraints~\eqref{const1:OP MILP} - \eqref{const7:OP MILP} and constraint \eqref{const9:OP MILP} correspond respectively to the robustness and plausibility requirement {in} \eqref{eqn:outerrob} - \eqref{eqn:outerplaus}.

The inner maximisation program can be formulated 
{as} the following MILP program, {where} the superscripts $0$ on weight matrices and biases indicate they are model parameters of the original model $\model{\Theta}$, and $\delta$ is the bound of model magnitude change specified in $\Delta$:
\begin{subequations}\label{OP:innermaximisation}
	\begin{align}
		\label{obj:OP2 MILP}
		\underset{V, W, B, \gamma}{\max}  & \quad -V_{h+1} \\
		\text{s.t.} \quad
		\label{const1:OP2 MILP}
		& V_0 = x',\\
		\label{const2:OP2 MILP}
		& \gamma_i \in \{0,1\}^{|N_i|}, ~ i \in [h]\\
		\label{const3:OP2 MILP}		
		& 0 \leq V_i \leq M \gamma_i, ~ i \in [h]  \\
		\label{const6:OP2 MILP}		
		& W_i V_{i-1} + B_i \leq V_i \leq (W_i V_{i-1} + B_i) + M (1-\gamma_i), \\&\nonumber
  i \in [h]\\
		\label{const7:OP2 MILP}
		& V_{h+1} = W_{h+1} V_h + B_{h+1} \\
            \label{const8:OP2 MILP}
            & W_i^0 - \delta \leq W_i \leq W_i^0 + \delta, ~ i \in [h+1] \\
            \label{const9:OP2 MILP}
            & B_i^0 - \delta \leq B_i \leq B_i^0 + \delta, ~ i \in [h+1]
	\end{align}
\end{subequations}

Due to the flexibility of such MILP programs, the framework accommodates continuous, ordinal, and categorical features \cite{mohammadi2021scaling}. Specific requirements like feature immutability or associations between features can also be encoded \cite{ustun2019actionable}. These MILP problems can be directly solved using off-the-shelf solvers such as Gurobi \cite{gurobi}. 

\subsection{Soundness, Completeness and Convergence of Algorithm \ref{alg:algo1}}
\label{subsec:theoreticalstuff}
We now discuss the soundness and completeness of our method by restricting the search space for the CE to the plausible region $\mathcal{X}_{plaus}$. From its definition, the vertices (except the input $x$) of $\mathcal{X}_{plaus}$ are $\Delta$-robust, which {thus} satisfies the robustness requirement of our target problem (Definition~\ref{def:robandplausproblemminmax}). 
This means that there exist at least $k$ points in the search space satisfying constraint~\eqref{eqn:plausobjminmax} that also satisfy constraint~\eqref{eqn:robobjminmax}, making these points feasible solutions for the target problem. We {may thus make} 
the following remark:

\begin{proposition}
    Algorithm 1 is sound and complete if $\exists$ $x'\in \dataset$ such that $x'$ is $\Delta$-robust.
\label{def:soundnesscompleteness}
\end{proposition}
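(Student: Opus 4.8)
The plan is to establish soundness and completeness as two separate implications, both hinging on the structural guarantees provided by the plausible region $\mathcal{X}_{plaus}$ and the exactness of the interval abstraction (recalled from \cite{oursaaai23} after Definition~\ref{def:interval_abstraction}). First I would fix the assumption: suppose there exists some $x' \in \dataset$ that is $\Delta$-robust, i.e. $\abst{\Theta}{\Delta}(x') \geq 0$. By the construction of $S_{k,\Delta}$, this assumption guarantees that the $k$ $\Delta$-robust nearest-neighbours set is well-defined and nonempty, and hence $\mathcal{X}_{plaus}$, the convex hull of $S_{k,\Delta} \cup \{x\}$, contains at least one $\Delta$-robust vertex. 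This is the key feasibility witness: the target problem \eqref{eqn:minmax_robust_problem} has a nonempty feasible set, since each vertex in $S_{k,\Delta}$ satisfies both the robustness constraint \eqref{eqn:robobjminmax} and the plausibility constraint \eqref{eqn:plausobjminmax}.

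For \emph{soundness}, I would argue that whenever Algorithm~\ref{alg:algo1} returns an $x'$, that $x'$ is a genuine provably robust and plausible CE in the sense of Definition~\ref{def:robandplausproblemminmax}. The termination condition of the repeat loop is $(-\model{\Theta'}(x')) \leq 0$, where $\model{\Theta'}$ is the worst-case shift returned by the inner maximisation \eqref{eqn:inner_problem}. Since the inner problem maximises $-\model{\Theta'}(x')$ over all $\model{\Theta'} \in \Delta$, this termination condition is exactly $\max_{\model{\Theta'} \in \Delta}[-\model{\Theta'}(x')] \leq 0$, i.e. constraint \eqref{eqn:robobjminmax}. By the exactness of the interval abstraction this is equivalent to $\abst{\Theta}{\Delta}(x') \geq 0$, so $x'$ is $\Delta$-robust; and plausibility holds because the outer minimisation \eqref{eqn:outer_problem} enforces $x' \in \mathcal{X}_{plaus}$ via constraint~\eqref{const9:OP MILP} at every iteration. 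Hence any returned $x'$ satisfies all three requirements of the target problem.

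For \emph{completeness}, I would show that the algorithm does not fail to find a solution when one exists, and moreover returns an optimal one. Here I would lean on the convergence result (proved later in Section~\ref{subsec:theoreticalstuff}, which I may assume) to guarantee the loop terminates and returns some $x'$; the feasibility witness above ensures the outer minimisation is never infeasible, since $\mathcal{X}_{plaus}$ always contains a $\Delta$-robust point that satisfies every relaxed constraint $-\model{\Theta'}(x') \leq 0$ for $\model{\Theta'} \in \Delta' \subseteq \Delta$. Optimality then follows from the cutting-plane structure of the bi-level scheme: the outer problem minimises $dist(x,x')$ over the full plausible region subject to a growing set $\Delta'$ of constraints, so its optimal value is a lower bound on the true optimum of \eqref{eqn:minmax_robust_problem}; at termination the returned point is feasible for the full problem, matching that lower bound.

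The main obstacle I anticipate is the completeness/optimality direction, specifically justifying that restricting the search to the convex hull $\mathcal{X}_{plaus}$ does not exclude the true optimal provably robust and plausible CE. Soundness is essentially a direct reading of the termination condition through the abstraction's exactness, but completeness requires care: one must make precise what ``complete'' means relative to the plausibility constraint (the optimum is sought \emph{within} $\mathcal{X}_{plaus}$, not over all of $\R^d$), and then verify that the finite cutting-plane iteration converges to that constrained optimum rather than a suboptimal feasible point. I expect the proof to invoke the convergence argument to close this gap, which is why the statement is phrased conditionally on the existence of a $\Delta$-robust dataset point, the condition that rules out the degenerate case where $S_{k,\Delta}$ cannot be formed and the feasible set is empty.
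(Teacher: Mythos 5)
Your proposal is correct and takes essentially the same route as the paper: the paper's entire justification is the feasibility-witness observation that the $\Delta$-robust dataset points forming the vertices of $\mathcal{X}_{plaus}$ satisfy both constraints \eqref{eqn:robobjminmax} and \eqref{eqn:plausobjminmax} of the target problem, stated as a short remark from which the proposition is asserted. Your additional detail---reading the termination check as the full robustness constraint via the inner maximisation, and the cutting-plane lower-bound argument for optimality, leaning on the later convergence result---simply makes explicit what the paper leaves implicit, and both accounts share the same minor gloss that a single $\Delta$-robust point in $\dataset$ need not yield the $k$ points required for $S_{k,\Delta}$ as literally defined.
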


Next, we adapt the method in \cite[Section 5.2]{mutapcic2009cutting} to provide an upper bound on the maximum number of iterations of Algorithm~\ref{alg:algo1}.

\begin{proposition}
    Given the requirements of Algorithm~\ref{alg:algo1}, assume the classifier $\model{\Theta}$ is Lipschitz continuous in $x'$. Then{,} the maximum number of iterations before Algorithm~\ref{alg:algo1} terminates is bounded. 
\end{proposition}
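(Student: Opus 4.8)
The plan is to read Algorithm~\ref{alg:algo1} as a cutting-set (constraint-generation) scheme for the robust problem \eqref{eqn:minmax_robust_problem}, in which the decision variable $x'$ ranges over the compact plausible region $\mathcal{X}_{plaus}$ while the adversary ranges over the compact shift set $\Delta$, and then to bound the iteration count by a packing argument on the sequence of outer iterates. Write $g(x',\model{\Theta'}) = -\model{\Theta'}(x')$ for the robustness constraint function, so the target requires $g(x',\model{\Theta'})\le 0$ for every $\model{\Theta'}\in\Delta$. Each pass of the loop produces an outer iterate $x'_k$ solving the outer problem \eqref{eqn:outer_problem} over the current finite subset $\Delta'\subseteq\Delta$, and an adversary $\model{\Theta'_k}$ attaining $\max_{\model{\Theta'}\in\Delta} g(x'_k,\model{\Theta'})$; the loop continues only while this maximum stays above the stopping tolerance $\epsilon$.

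First I would fix the two ingredients that make the packing work. Since $\mathcal{X}_{plaus}$ is the convex hull of finitely many points it is compact, with some finite diameter $D$, and (using the feasibility guaranteed by Proposition~\ref{def:soundnesscompleteness}, as the $\Delta$-robust vertices satisfy $g(\cdot,\model{\Theta'})\le 0$ for all $\model{\Theta'}\in\Delta$) every outer problem is feasible, so each iterate $x'_k$ exists and lies in $\mathcal{X}_{plaus}$. Next, from the assumed Lipschitz continuity of $\model{\Theta}$ in $x'$ together with compactness of $\Delta$, I would extract a single constant $L<\infty$ such that $g(\cdot,\model{\Theta'})$ is $L$-Lipschitz in $x'$ uniformly over $\model{\Theta'}\in\Delta$; for a ReLU network such an $L$ is bounded by the product of the layer operator norms, which remain bounded over the box $\Delta$, and negation does not change the constant.

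The core step is the separation of the iterates. Suppose the algorithm has not terminated at iteration $k$, so the generated adversary satisfies $g(x'_k,\model{\Theta'_k}) > \epsilon$. Because $\Delta'$ only grows, $\model{\Theta'_k}\in\Delta'$ at every later iteration $m>k$, and hence feasibility of $x'_m$ for the outer problem forces $g(x'_m,\model{\Theta'_k})\le 0$. Subtracting these two inequalities and applying the uniform Lipschitz bound gives
\begin{equation*}
\epsilon < g(x'_k,\model{\Theta'_k}) - g(x'_m,\model{\Theta'_k}) \le L\,\lVert x'_k - x'_m\rVert ,
\end{equation*}
so $\lVert x'_k - x'_m\rVert > \epsilon/L$ for all $m>k$; that is, the iterates are pairwise $(\epsilon/L)$-separated. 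A standard packing bound then caps the number of such points inside a set of diameter $D$ in $\mathbb{R}^d$ (disjoint balls of radius $\epsilon/2L$ packed into a ball of radius $D+\epsilon/2L$) by $(1+2DL/\epsilon)^d$, which bounds the number of iterations.

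The main obstacle I anticipate is making the uniform Lipschitz constant honest: the hypothesis is stated for the single trained model $\model{\Theta}$, and I must argue that a common $L$ survives over the whole perturbation box $\Delta$, which is exactly where compactness of $\Delta$ and boundedness of the ReLU Lipschitz constant in the parameters are used. A secondary subtlety is that a strictly positive tolerance $\epsilon>0$ is needed for the separation to be uniform across all iterate pairs; with the exact stopping rule $\epsilon=0$ one obtains only that consecutive iterates are distinct, so I would state the bound for the $\epsilon$-tolerant stopping criterion, matching the argument of \cite[Section 5.2]{mutapcic2009cutting}.
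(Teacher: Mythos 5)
Your proposal is correct and follows essentially the same route as the paper: the same cutting-set reading of Algorithm~\ref{alg:algo1}, the same Lipschitz-based pairwise separation of the outer iterates (the paper uses tolerances $\sigma > t > 0$ where you use $\epsilon$), and the same ball-packing volume comparison yielding a bound of the form $\left(1 + 2RL/t\right)^d$, adapted from \cite[Section 5.2]{mutapcic2009cutting}. If anything, you are slightly more careful than the paper on two points it glosses over: you extract a Lipschitz constant $L$ that is uniform over the compact shift set $\Delta$ (the paper states \eqref{eqn:lipcon} only for $\model{\Theta}$ but then applies it to the shifted models $\model{\Theta}^{(m)}$), and you make explicit that a strictly positive stopping tolerance is needed for the uniform separation.
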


\begin{proof} 

Firstly, we assume two small tolerance variables $\sigma > t > 0$ and modify the robustness constraint \eqref{eqn:robobjminmax} of Definition~\ref{def:robandplausproblemminmax} to: $\underset{\model{\Theta'} \in \Delta}{\max} [-\model{\Theta'}(x') + \sigma] \leq t$, such that the correctness of the robustness guarantee is not affected. The termination condition for Algorithm~\ref{alg:algo1} therefore becomes $-\model{\Theta'}(x') + \sigma \leq t$.

Consider the plausible CE problem (Definition~\ref{def:plausiblece} with the validity constraint modified to $-\model{\Theta}(x') + \sigma \leq t)$, which is the problem solved by the first execution ({iteration} $1$) of the outer minimisation problem in Algorithm~\ref{alg:algo1}. We denote its feasible region as $\mathcal{F}$. Suppose $\model{\Theta}$ is a ReLU NN without the final (softmax or sigmoid) activation layer, then $\model{\Theta}$ is Lipschitz continuous. 
Let $f(x', \model{\Theta}):= -\model{\Theta}(x') + \sigma$, then $f$ is Lipschitz continuous in $x'$ over $\mathcal{F}$ with {some} Lipschitz constant $L$. For a distance metric $dist: \mathbb{R}^d \times \mathbb{R}^d \rightarrow \mathbb{R}^+$, and {any} $x_1, x_2 \in \mathcal{F}$, we have: 

\begin{equation}
    |f(x_1, \model{\Theta}) - f(x_2, \model{\Theta})| \leq L \times dist(x_1, x_2)
\label{eqn:lipcon}
\end{equation}

At 
{iteration} $m$, we denote the CE found by the outer minimisation as $x'^{(m)}$, and the shifted model found by the inner maximisation as $\model{\Theta}^{(m)}$. Then, $f(x'^{(m)}, \model{\Theta}^{(m)}):= -\model{\Theta}^{(m)}(x'^{(m)}) + \sigma$. Assume at step $m$ the algorithm has not terminated, then 
\begin{equation}
    f(x'^{(m)}, \model{\Theta}^{(m)}) > t
\label{eqn:fm}
\end{equation}

For 
{the} iteration steps 
{$n>m$}, $x'^{(n)}$ is required to be valid on 
$\model{\Theta}^{(m)}$ as specified in the outer minimisation problem, we therefore have:
\begin{equation}
    f(x'^{(n)}, \model{\Theta}^{(m)})\leq 0
\label{eqn:fn}
\end{equation}

Combining 
\eqref{eqn:fm} and \eqref{eqn:fn} 
{yields}
\begin{align}
    f(x'^{(m)}, \model{\Theta}^{(m)}) - f(x'^{(n)}, \model{\Theta}^{(m)}) > t
\label{eqn:fmn}
\end{align}

Further combining \eqref{eqn:fmn} with \eqref{eqn:lipcon}, for the 
{iteration steps $n > m$},
\begin{equation}
    dist(x'^{(m)}, x'^{(n)}) > \frac{t}{L}
\label{eqn:dist}
\end{equation}

Consider the balls 
{$B_i, i=1,\ldots,m,$} of diameter $\frac{t}{L}$ centred at each intermediate result of the outer minimisation problem, $x'^{(i)}$. From \eqref{eqn:dist}, it can be concluded that for any two intermediate $x'^{(i)}$, $x'^{(j)}$, 
{$1 < i,j < m$}, $dist(x'^{(i)},x'^{(j)}) > \frac{t}{L}$, and ${x'^{(i)}}$ and ${x'^{(j)}}$ are the centres of the balls $B^{(i)}$ and $B^{(j)}$. Therefore, any two circles will not intercept. The total volume of these balls is thus $m\times U \times (\frac{t}{L})^d$, where $U$ is the unit volume in $\mathbb{R}^d$.

Consider a ball that encompasses the feasible solution region $\mathcal{F}$ 
{and} let $R$ be its radius. We know that 
{$x'_i, i=1,\ldots,m,$} are all within the feasible region $\mathcal{F}$, therefore, the ball $B$ that has a radius $R+\frac{t}{2L}$ will cover the spaces of the small balls 
{$B_i, i=1,\ldots,m$}. Also, the volume of $B$ 
{is} $U\times (2R + \frac{t}{L})^d$ 
{and} will be greater than the total volume of the small balls{, which means}:
\begin{align*}
    U\times \left(2R + \frac{t}{L} \right)^d > m\times U \times \left( \frac{t}{L} \right)^d \quad \implies \quad m < \left(\frac{2RL}{t} + 1 \right)^d
\end{align*}

It can be concluded that the step number at which Algorithm~\ref{alg:algo1} has not terminated is bounded above by the $(\frac{2RL}{t} + 1)^d$.
\end{proof}

\section{Experiments}
\label{sec:experiments}
In this section, we demonstrate that our proposed method achieves state-of-the-art performances compared with existing robust CEs generation methods. 

\paragraph{Datasets and Classifiers.} 
Our experiments use four benchmark datasets in financial and legal contexts: the Adult Income (ADULT), COMPAS, Give Me Some Credits (GMC), and HELOC datasets. We adopt the pre-processed versions available in the CARLA library \cite{pawelczyk2021carla} where each dataset contains binarised categorical features and min-max scaled continuous features. Labels 0 and 1 are the unwanted and the desired class, respectively. We split each dataset into two halves. We use the first half for training NNs with which the robust CEs are generated, and the second half for model retraining and evaluating the robustness of the CEs.

For making predictions and generating CEs, the NNs contain two hidden layers with ReLU activation functions. They are trained using the Adam optimiser with a batch size of 32, and under the standard $80\%, 20\%$ train-test dataset split setting. The classifiers achieved 84\%, 85\%, {94}{\%}, and 76\% accuracies on the test set of ADULT, COMPAS, {GMC}, and HELOC datasets{, respectively}.

The retrained models have the same hyperparameters and training procedures as the original classifiers. Following the experimental setup in previous works ~\cite{pmlr-v162-dutta22a,robovertime,nguyen2022robust,blackconsistent,upadhyay2021towards}, for each dataset, we train 10 new models using both halves of the dataset to simulate the possible retrained models after new data are collected. We also train 10 new models using 99\% of the first half of the dataset (different 1\% data are discarded for each training), to simulate the leave-one-out retraining procedures. The random seed is perturbed for retraining. These 20 retrained models are used for evaluating the robustness of CEs.

\paragraph{Evaluation Metrics.}
The CEs are evaluated by the following metrics for their proximity, plausibility, and robustness. 
\begin{itemize}
    \item {$\ell_1$} measures the average $L_1$ distance between a CE and its corresponding input. 
    \item $lof$ is the average 10-Local Outlier Factor \cite{lof} of the generated CEs, which indicates to what extent a data point is an outlier wrt its $k$ nearest neighbours in a specified dataset. $lof$ values close to $1$ indicate inliers, larger values (especially if greater than $1.5$) indicate outliers.
    \item $vr$, the validity of CEs on the retrained models, is defined as the average percentage of CEs that remain valid (classified to class 1) under the retrained models.
    \item $v\Delta$ {is} the percentage of CEs that are $\Delta$-robust. The bound {of} model parameter changes $\delta$ is specified to be the same as the value used in our algorithm.
\end{itemize}

\paragraph{Baselines.} We compare our method with six 
state-of-the-art methods for generating CEs, including five {which target} robust{ness}. WCE 
\cite{Wachter_17} is the first method to generate CEs for NNs, which minimises the {$\ell_1$} distance between the CEs and the inputs. Robust Bayesian Recourse (RBR) \cite{nguyen2022robust} addresses the proximity, robustness, and plausibility of CEs. RobXNN \cite{pmlr-v162-dutta22a} is a nearest-neighbour-based method that focuses on a different {notion of }robustness {to model changes}.  Robust Algorithmic Recourse (ROAR) \cite{upadhyay2021towards} optimises for proximity and the same $\Delta$ notion of robustness. Proto-R and MILP-R are the methods proposed by \cite{oursaaai23} which embed the $\Delta$-robustness test into the base methods {of} \cite{van2021interpretable} and \cite{mohammadi2021scaling}. 
For all methods including ours, we tune their hyperparameters to maximise the validity after retraining $vr$.

\begin{table*}[ht!]
    \centering
    \begin{tabular}{ccccc|cccc|cccc|cccc}
         \cline{2-17}
         &
        \multicolumn{4}{c}{ADULT} &
        \multicolumn{4}{c}{COMPAS} &
        \multicolumn{4}{c}{GMC} &
        \multicolumn{4}{c}{HELOC} \\
        & 
        \!\!\textbf{vr}$\uparrow$\!\! & 
        \!\!\textbf{v$\Delta$$\uparrow$}\!\! &
        \!\!$\ell_1$$\downarrow$\!\! &
        \!\!\textbf{lof$\downarrow$}\!\! &
        \!\!\textbf{vr}$\uparrow$\!\! & 
        \!\!\textbf{v$\Delta$$\uparrow$}\!\! &
        \!\!\!$\ell_1$$\downarrow$\!\!\! &
        \!\!\textbf{lof$\downarrow$}\!\! &
        \!\!\textbf{vr}$\uparrow$\!\! & 
        \!\!\textbf{v$\Delta$$\uparrow$}\!\! &
        \!\!\!$\ell_1$$\downarrow$\!\!\! &
        \!\!\textbf{lof$\downarrow$}\!\! &
        \!\!\textbf{vr}$\uparrow$\!\! & 
        \!\!\textbf{v$\Delta$$\uparrow$}\!\! &
        \!\!\!$\ell_1$$\downarrow$\!\!\! &
        \!\!\textbf{lof$\downarrow$}\!\! \\
        \hline
        \!\!\!\!WCE\!\!\!\! & 
        \!\!89\!\! &
        \!\!78\!\! &
        \!\!.175\!\! &
        \!\!1.59\!\! & 
        \!\!57\!\! &
        \!\!0\!\! &
        \!\!.170\!\! &
        \!\!1.81\!\! &
        \!\!84\!\! &
        \!\!18\!\! &
        \!\!.148\!\! &
        \!\!2.80\!\! &         
        \!\!49\!\! &
        \!\!0\!\! &
        \!\!.045\!\! &
        \!\!1.16\!\! \\
        \!\!\!\!RBR\!\!\!\! & 
        \!\!100\!\! &
        \!\!0\!\! &
        \!\!.031\!\! &
        \!\!1.28\!\! & 
        \!\!100\!\! &
        \!\!62\!\! &
        \!\!.043\!\! &
        \!\!1.34\!\! & 
        \!\!90\!\! &
        \!\!0\!\! &
        \!\!.050\!\! &
        \!\!1.31\!\! &         
        \!\!80\!\! &
        \!\!0\!\! &
        \!\!.038\!\! &
        \!\!1.10\!\! \\
        \!\!\!\!RobXNN\!\!\!\! & 
        \!\!100\!\! &
        \!\!82\!\! &
        \!\!.064\!\! &
        \!\!1.28\!\! & 
        \!\!100\!\! &
        \!\!82\!\! &
        \!\!.050\!\! &
        \!\!1.11\!\! &
        \!\!100\!\! &
        \!\!96\!\! &
        \!\!.073\!\! &
        \!\!1.35\!\! &
        \!\!100\!\! &
        \!\!30\!\! &
        \!\!.073\!\! &
        \!\!1.04\!\! \\    
        \!\!\!\!ROAR\!\!\!\! & 
        \!\!99\!\! &
        \!\!98\!\! &
        \!\!.279\!\! &
        \!\!1.96\!\! & 
        \!\!98\!\! &
        \!\!100\!\! &
        \!\!.219\!\! &
        \!\!2.84\!\! &
        \!\!96\!\! &
        \!\!88\!\! &
        \!\!.188\!\! &
        \!\!4.22\!\! &         
        \!\!98\!\! &
        \!\!98\!\! &
        \!\!.109\!\! &
        \!\!1.57\!\! \\
        \!\!\!\!PROTO-R\!\!\!\! & 
        \!\!98\!\! &
        \!\!55\!\! &
        \!\!.068\!\! &
        \!\!1.60\!\! & 
        \!\!100\!\! &
        \!\!100\!\! &
        \!\!.084\!\! &
        \!\!1.36\!\! &
        \!\!100\!\! &
        \!\!100\!\! &
        \!\!.066\!\! &
        \!\!1.49\!\! &  
        \!\!100\!\! &
        \!\!100\!\! &
        \!\!.057\!\! &
        \!\!1.21\!\! \\
        \!\!\!\!{MILP-R}\!\!\!\! & 
        \!\!100\!\! &
        \!\!100\!\! &
        \!\!.024\!\! &
        \!\!1.69\!\! & 
        \!\!100\!\! &
        \!\!100\!\! &
        \!\!.040\!\! &
        \!\!1.71\!\! &
        \!\!100\!\! &
        \!\!100\!\! &
        \!\!.059\!\! &
        \!\!2.08\!\! &         
        \!\!100\!\! &
        \!\!100\!\! &
        \!\!.044\!\! &
        \!\!2.48\!\! \\
        \!\!\!\!\textbf{OURS}\!\!\!\! & 
        \!\!100\!\! &
        \!\!100\!\! &
        \!\!.046\!\! &
        \!\!1.22\!\! & 
        \!\!100\!\! &
        \!\!100\!\! &
        \!\!.039\!\! &
        \!\!1.24\!\! &
        \!\!100\!\! &
        \!\!100\!\! &
        \!\!.058\!\! &
        \!\!1.24\!\! &         
        \!\!100\!\! &
        \!\!100\!\! &
        \!\!.057\!\! &
        \!\!1.04\!\! \\
        \hline
    \end{tabular}
    \caption{Evaluations of PROPLACE (OURS) and baselines on NNs. The $\uparrow$ ($\downarrow$) indicates that higher (lower) values are preferred for the evaluation metric.}
    \label{tab:results}
\end{table*}

\paragraph{Results.} We randomly select 50 test points from each dataset that are classified to {be} the unwanted class, then apply our method and each baseline to generate CEs for these test points. Results are {shown} in Table~\ref{tab:results}. 

As a non-robust baseline, the WCE method is the least robust while producing high {$\ell_1$} costs and poor plausibility. Though RBR shows the lowest $\ell_1$ results on three datasets, it has only moderate robustness against the naturally retrained models and is not $\Delta$-robust on any dataset. The rest of the baselines all show strong robustness on at least three datasets, with our method having slightly better $vr$ and $v\Delta$ results, evaluated at 100\% in every experiment. This indicates that {our method} PROPLACE can not only guarantee robustness under bounded model parameter changes but also induce reliable robustness against unbounded model changes. In terms of plausibility, our method shows the best lof score in most experiments. Therefore, our method has addressed the 
{limitation} in the literature that no method optimises for guaranteed $\Delta$-robustness and plausibility. Though the two properties have established trade-offs with proximity \cite{pawelczyk2020counterfactual,pawelczyk2022probabilistically,upadhyay2021towards}, our method still shows {$\ell_1$} costs lower than all methods except RBR, which is significantly less robust, and MILP-R, which finds outliers. For {the} COMPAS dataset, our method has the best proximity result among all baselines. 

Note that the PROTO-R baseline from the work which proposed certification for $\Delta$-robustness failed to find $\Delta$-robust CEs on the ADULT dataset, as was the case in their results (see Table 1, \cite{oursaaai23}). This is due to the fact that their method rely heavily on a base method to find CEs, and it is not straightforward to be always able to direct the hyperparameters search for optimising $\Delta$-robustness. With improved soundness and completeness ({Proposition}~\ref{def:soundnesscompleteness}), PROPLACE always finds provably robust results.

\section{Conclusions}
\label{sec:conclusion}
We proposed a robust optimisation framework PROPLACE to generate provably robust and plausible CEs for neural networks. The method addresses the limitation in the literature that existing methods lack formal robustness guarantees to bounded model parameter changes and do not generate plausible CEs. We proved the soundness, completeness, and convergence of PROPLACE. Through a comparative study, we show the efficacy of our method, demonstrating the best robustness and plausibility results with better proximity than the most robust baselines. Despite the specific form of robustness we target, PROPLACE is also empirically robust to model retraining with unbounded parameter changes. 
Future work could include investigating the properties of actionability and diversity, evaluations with user studies, and investigating connections between $\Delta$-robustness and different notions of robustness measures.

\section*{Acknowledgement}
{Jiang, Rago and Toni were partially funded by J.P. Morgan and by the Royal Academy of Engineering under the Research Chairs and Senior Research Fellowships scheme. 
Jianglin Lan is supported by a Leverhulme Trust Early Career Fellowship under Award ECF-2021-517. 
Leofante is supported by an Imperial College Research Fellowship grant. 
Rago and Toni were partially funded by the European Research Council (ERC) under the European Union’s Horizon 2020 research and innovation programme (grant agreement No. 101020934). 
Any views or opinions expressed herein are solely those of the authors listed.}

\bibliographystyle{named}
\bibliography{bib}

\end{document}